\newtheorem{theorem}{Theorem}[section]
\newtheorem{lemma}{Lemma}[section]
\newtheorem{definition}{Definition}[section]
\newcommand{\figref}[1]{Fig.~\ref{#1}}
\title{Subframework-based Bearing Rigidity Maintenance Control in Multirobot Networks
}
\author{
    J. Francisco Presenza$^{1}$,
    Ignacio Mas$^{2}$,
    J. Ignacio Alvarez-Hamelin$^{3}$, and
    Juan I. Giribet$^{4}$
\thanks{$^{1}$J.F. Presenza is with Universidad de Buenos Aires, Facultad de Ingenier\'ia and CONICET, 
%Paseo Colón 850, C1063ACV,
Buenos Aires, Argentina.
    Preferred address for correspondence:   {\tt\small jpresenza@fi.uba.ar}}%
\thanks{$^{2,4}$I. Mas and J. Giribet are with Universidad de San Andr\'es and CONICET, Argentina.
        {\tt\small \{imas,jgiribet\}@udesa.ar}}%
\thanks{$^{3}$J. I. Alvarez-Hamelin is with Universidad de Buenos Aires, Facultad de Ingenier\'ia; and also with CONICET--Universidad de Buenos Aires, INTECIN, Argentina.
        {\tt\small ihameli@fi.uba.ar}}%
%\thanks{$^{4}$J. I. Giribet is with Universidad de San Andr\'es and CONICET, Argentina.
%        {\tt\small jgiribet@conicet.gov.ar}}%
\thanks{
© IEEE. Personal use of this material is permitted. Permission from IEEE must be obtained for all other uses, in any current or future media, including reprinting/republishing this material for advertising or promotional purposes, creating new collective works, for resale or redistribution to servers or lists, or reuse of any copyrighted component of this work in other works.
The published version is available at: https://doi.org/10.1109/LCSYS.2025.3580768
} 
}
\begin{document}

\maketitle
\thispagestyle{empty}
\pagestyle{empty}

%%%%%%%%%%%%%%%%%%%%%%%%%%%%%%%%%%%%%%%%%%%%%%%%%%%%%%%%%%%%%%%%%%%%%%%%%%%%%%%%
\begin{abstract}
This work presents a novel approach for \textit{bearing rigidity} analysis and control in multi-robot networks with sensing constraints and dynamic topology. By decomposing the system's framework into \textit{subframeworks}, we express bearing rigidity---a global property---as a set of \textit{local} properties, with rigidity eigenvalues serving as natural \textit{local rigidity measures}. We propose a decentralized gradient-based controller to execute mission-specific commands using only bearing measurements. The controller preserves bearing rigidity by keeping the rigidity eigenvalues above a threshold, using only information exchanged within subframeworks.
Simulations evaluate the scheme's effectiveness, underscoring its scalability and practicality.
\end{abstract}

%%%%%%%%%%%%%%%%%%%%%%%%%%%%%%%%%%%%%%%%%%%%%%%%%%%%%%%%%%%%%%%%%%%%%%%%%%%%%%%%
\section{Introduction}

\label{sec:intro}
An intensive area of research in recent years considers the coordination of multirobot systems based on relative \textit{bearing measurements} (line-of-sight directions).
This is due to the proliferation of missions in GPS-denied environments (e.g., underwater, indoor, etc.), facilitated by low-cost vision-based bearing sensors (e.g., monocular cameras).

Bearing-based \textit{formation control} and \textit{network localization} are widely studied topics in the field \cite{Zhao2019CSM, Michieletto2016}.
Formation controllers rely on bearing information from neighboring robots to guide each agent, enabling the entire team to achieve and maintain a desired spatial geometric pattern \cite{Zhao2016TAC, Schiano2016}. Network localization protocols seek to estimate the relative positions of robots using bearing measurements \cite{Zhao2016AUT, Schiano2016, Spica2016}.
The key enabling property is bearing rigidity (\textit{BR}), which examines whether the available bearings determine the robot positions, up to a translation and a scaling of the framework---mathematical model used to represent the multirobot network.
Recently, a new class of approaches for bearing-based coordination has emerged, grounded in nonlinear observability theory \cite{Tang2023, DeCarli2023}.
These schemes exploit the influence of the control inputs in the system's observability, allowing the rigidity requirement to be relaxed---though they require specific motions to ensure persistence of excitation (\textit{BPE}) conditions.

While the existing literature provides a strong foundation, most works consider a time-invariant graph structure. Or, when changes in the topology are allowed, the observability requirements---\textit{BR} or \textit{BPE}---are generally assumed to be satisfied. 
However, this is idealistic, as the set of available bearings is inherently dynamic due to factors such as limited communication and camera ranges, narrow fields of view, and occlusions.
As a result, \textit{BR} and \textit{BPE} can be lost, leading to failure.
This highlights the need to study bearing observability maintenance as a control problem.
The goal is to enable the multi-robot system to perform arbitrary tasks---such as formation control---using only bearing measurements, despite its dynamic topology. 
This is accomplished by ensuring that trajectories preserve the observability, which in turn allows the robots to estimate their relative positions.

{
In this letter, we address \textit{bearing rigidity maintenance} for a team of point robots in $\mathbb{R}^d$ ($d \geq 2$), each equipped with an onboard camera of limited range and field of view for acquiring bearing measurements.
Robot positions are estimated from bearings and controlled via velocity commands, while camera orientations—defined by a yaw angle—are known and controllable through angular velocity inputs.
Yaw angles are used to model camera orientations and are not part of the uncertain robot state.
By knowing these angles w.r.t a common reference frame, robots can express bearing measurements in that frame, allowing the use of rigidity theory in $\mathbb{R}^d$ \cite{Zhao2019CSM, Zhao2016TAC}.
We recognize that absolute orientation sensors (e.g., compasses) may be unreliable in certain environments, and that yaw angles can alternatively be inferred from bearing measurements using rigidity in $\mathbb{R}^d \times \mathcal{S}^1$ \cite{Schiano2016, Spica2016}.
However, when possible, estimating robot orientations independently prevents overconstraining the
sensing topology, as rigidity in $\mathbb{R}^d$ typically requires fewer measurements.}

A bearing rigidity maintenance scheme was proposed by \cite{Schiano2017}, building upon the earlier approach presented in \cite{Zelazo2015} which considers distance measurements. 
In these strategies, the loss of rigidity is treated as an obstacle in the space to be avoided.
By defining a measure of a network's degree of rigidity, the \textit{rigidity eigenvalue}, the authors proposed a gradient descent controller that guides the robots to keep this
measure above a minimum level, while enabling changes in the network topology.
In \cite{Zelazo2015}, the robots' communication ranges determine the acquisition of distance measurements, and \cite{Schiano2017} incorporates limited camera field-of-view and occlusions between robots.

Scalable decentralized rigidity maintenance remains an open problem.
This is because of the global nature of rigidity: the loss of a measurement---such as when a robot moves out of another's field-of-view---can compromise the rigidity of the entire network.
As a consequence, developing fully decentralized algorithms is challenging.
The approaches referenced above require each robot to access the rigidity eigenvalue and its eigenvector, which depend on the entire network.
For the distance case, \cite{Zelazo2015} proposed a consensus filter that allows each agent to locally estimate these global variables. 
Nevertheless, the convergence properties of such estimators are strongly influenced by the network's
diameter, which serves as a proxy for communication delay.
As the number of robots increases, the diameter is expected to grow due to the limited number of connections each robot can maintain, because of energy and computing constraints.
This limits the scalability of consensus-based rigidity maintenance controllers.

In this work, we address these challenges by proposing a decomposition of a framework into smaller groups called \textit{subframeworks}.
Based on this, we state a necessary and sufficient condition for bearing rigidity in $\mathbb{R}^d$ in terms of subframeworks, enabling its representation as a set of \textit{local} conditions.
The rigidity eigenvalues associated with the subframeworks arise naturally as \textit{local rigidity measures}. 
This allows the use of subframeworks as a novel method for decentralized bearing rigidity maintenance in networks with dynamic topology.
A subframework-based approach was proposed for the distance case in \cite{PresenzaACC2022}. To our knowledge, this is the first application of the technique to the bearing setting.
The main contributions of this work are: (a) the development of tools to express bearing rigidity through subframeworks; (b) a controller that maintains rigidity by keeping each subframework’s rigidity eigenvalue above a threshold, achieving a global objective without the need for network-wide exchange of information; and (c) the introduction of performance metrics to assess its scalability and practicality, and to enable comparison with existing methods.

This letter is organized as follows. Section \ref{sec:preliminaries} presents basic definitions and an overview of bearing rigidity theory in $\mathbb{R}^d$. In Section \ref{sec:subframework}, we introduce a subframework-based bearing rigidity formalism and include a comparison with the distance-based setting \cite{PresenzaACC2022}. 
The subframework-based rigidity maintenance controller is introduced in Section \ref{sec:control}, together with a discussion on its decentralized implementation and its scalability. Simulation results are presented in Section \ref{sec:simulations}, and Section \ref{sec:conclusions} provides final conclusions and directions for future work.

\section{Preliminaries}
\label{sec:preliminaries}

\subsection{Basic Definitions}

A stack of $n$ vectors $v_1, \ldots, v_n$ is considered as a column vector and represented as $(v_i)_{i \in \mathcal{I}}$ given $\mathcal{I} = \{1, \ldots, n\}$.
The vector of all ones is $1_d \in\mathbb{R}^d$, and the identity matrix $I_d \in \mathbb{R}^{d \times d}$.
The null space of $A$ is $\mathrm{null}(A)$.
The Kronecker product is symbolized as $A \otimes B$.
Let $\mathcal{G} = (\mathcal{V}, \mathcal{E})$ be a graph with vertex set $\mathcal{V} = \{1, \ldots, n\}$ and edge set $\mathcal{E} \subseteq \mathcal{V} \times \mathcal{V}$.
The neighbors of $i$ are $\mathcal{N}_i = \{j : (i, j) \in \mathcal{E}\}$.
A graph is said to be undirected if $(i, j) \in \mathcal{E} \Leftrightarrow (j, i) \in \mathcal{E}$, and directed otherwise.
Undirected edges are written as $\{i, j\}$. 
The distance $\delta_{ij}\!=\!\delta_{ji}$ between $i$ and $j$ in an undirected graph is the length of the shortest path between them, measured in number of edges.
The maximum distance $\Delta(\mathcal{G}) =\max_{ij} \delta_{ij}$ is called the diameter.
A $d$-dimensional framework $\mathcal{F} = (\mathcal{G}, \boldsymbol{p})$ is given by an undirected graph $\mathcal{G}$ and an injective realization in $\mathbb{R}^d$ represented by $\boldsymbol{p} = (p_i)_{i \in \mathcal{V}} \in \mathbb{R}^{d|\mathcal{V}|}$.

\subsection{{Robot and Network Models}}
\label{sec:robot_network_model}

{
The state of each robot is modeled as $(p_i, \psi_i) \in \mathbb{R}^d \times \mathcal{S}^1$, where $p_i$ denotes its position and $\psi_i$ the camera's yaw angle.
Cameras feature limited range $\ell_i\!>\!0$ and field of view characterized by a half-angle cosine $0\!<\!\gamma_i\!<\!1$.
Let $b_{ij} = (p_j - p_i)/d_{ij}$ be the bearing measurement that $i$ takes from $j$ and $n_i$ the camera's optical axis, both expressed in the common reference frame, where $d_{ij} = \Vert p_i - p_j\Vert$.
Then, the (\emph{directed}) sensing graph $\mathcal{G}_s = (\mathcal{V}, \mathcal{E}_s)$ is defined such that $(i, j) \in \mathcal{E} \Leftrightarrow d_{ij} \leq \ell_i \text{ and } n_i^\mathsf{T} b_{ij} \geq \gamma_i$.
The sensing graph is a function of the joint state $(\boldsymbol{p}, \boldsymbol{\psi})$ where $\boldsymbol{p} = (p_i)_{i \in \mathcal{V}}$ and $\boldsymbol{\psi} = (\psi_i)_{i \in \mathcal{V}}$; however, only $\boldsymbol{p}$ has to be estimated from bearing measurements, as $\boldsymbol{\psi}$ is known.
Observe that $b_{ij} = - b_{ji}$ for all $(i, j) \in \mathcal{E}_s$, meaning that reciprocal measurements carry the same information.
Therefore, the sensing graph can be replaced by its undirected version, which is denoted $\mathcal{G} = (\mathcal{V}, \mathcal{E})$ for simplicity.
Robots are also able to communicate with each other over a radio channel within a range $\ell_c$ ($\geq\!\ell_i$).
The (\textit{undirected}) communication graph $\mathcal{G}_c = (\mathcal{V}, \mathcal{E}_c)$ depends solely on $\boldsymbol{p}$, and has $\{i, j\} \in \mathcal{E}_c \Leftrightarrow d_{ij} \leq \ell_c$.}

\subsection{Bearing Rigidity}

This section provides an overview of bearing rigidity theory in $\mathbb{R}^d$ for undirected sensing topologies, see \cite{Zhao2019CSM, Zhao2016AUT}.

\begin{definition}
    Let $(\mathcal{G}, \boldsymbol{p})$ be a $d$-dimensional framework. The \textit{bearing function} $b_{\mathcal{G}}: \mathbb{R}^{d|\mathcal{V}|} \to \mathbb{R}^{d|\mathcal{E}|}$ stacks the bearings associated with each edge, $b_{\mathcal{G}}(\boldsymbol{p}) = (b_{ij})_{\{i, j\} \in \mathcal{E}}$.
\end{definition}

The bearing function is invariant to translations and uniform scalings of the framework, known as \textit{trivial motions}.
Rigidity theory studies whether the set of available bearings determines $\boldsymbol{p}$ up to a trivial motion.
To express this, define the $d|\mathcal{E}|\!\times\!d|\mathcal{V}|$ Jacobian of $b_{\mathcal{G}}(\boldsymbol{p})$ with respect to $\boldsymbol{p}$, known as the \textit{bearing rigidity matrix}, denoted $R_{\mathcal{G}}(\boldsymbol{p})$.
Then,
\begin{equation}
    T(\boldsymbol{p}) \coloneqq \mathrm{span}\{1_{|\mathcal{V}|} \otimes I_d, \boldsymbol{p}\} \subseteq \mathrm{null}(R_{\mathcal{G}}(\boldsymbol{p})),    
\end{equation}
where $T(\boldsymbol{p})$ is known as the space of \textit{infinitesimal trivial motions} and its dimension is $d+1$.

\begin{definition}
    A $d$-dimensional framework $(\mathcal{G}, \boldsymbol{p})$ is said to be \textit{infinitesimally bearing rigid} (\textit{IBR}) if $\mathrm{null}(R_{\mathcal{G}}(\boldsymbol{p})) = T(\boldsymbol{p})$ (equivalently, if $\mathrm{rank}(R_{\mathcal{G}}(\boldsymbol{p})) = d|\mathcal{V}| - d - 1$).
    \label{def:inf_rigidity}
\end{definition}
\begin{definition}
    Let $(\mathcal{G}, \boldsymbol{p})$ be a $d$-dimensional framework, and let $w_{ij} = w_{ji} > 0$ be a weight assigned to each edge. 
    The $d|\mathcal{V}| \times d|\mathcal{V}|$ \textit {weighted bearing Laplacian} is defined such that the $(i, j)$-th $d \times d$ block has the form
    \begin{equation}
        [B_{\mathcal{G}}(\boldsymbol{p})]_{i, j} = 
        \begin{cases}
            0, & \ i \neq j, \{i, j\} \notin \mathcal{E} \\
            -w_{ij} P_{ij}, & \ i \neq j, \{i, j\} \in \mathcal{E} \\
            \sum_{k \in \mathcal{N}_i} w_{ik} P_{ik}, & \ i=j
        \end{cases}.
        \label{eq:bearing_laplacian}
    \end{equation}
    Here, $P_{ij} = P_{ji} = I_d - b_{ij} b_{ij}^\mathsf{T}$ is an orthogonal projection matrix associated with each edge. 
    \label{def:bearing_laplacian}
\end{definition}

{
With a suitable choice of weights (see Section \ref{sec:cost_rm}), \eqref{eq:bearing_laplacian} can be used to model a sensing graph without reciprocal bearing measurements.}
It holds that $B_{\mathcal{G}}(\boldsymbol{p}) = R_{\mathcal{G}}(\boldsymbol{p})^\mathsf{T} (W \otimes I_d) R_{\mathcal{G}}(\boldsymbol{p})$ where
$W$ is a ${|\mathcal{E}| \times |\mathcal{E}|}$ diagonal matrix whose nonzero entries are $w_{ij} d_{ij}^2$ for each $\{i, j\} \in \mathcal{E}$.
The bearing Laplacian is positive semi-definite and its eigenvalues are arranged in increasing order.
Rigidity is characterized by the $(d+2)$-th smallest eigenvalue of $B_{\mathcal{G}}(\boldsymbol{p})$, referred to as the \textit{rigidity eigenvalue}.
This eigenvalue also serves as a metric for a framework's rigidity level, making it useful for designing rigidity controllers.

\begin{theorem}[See {\cite{Schiano2017}}]
    Let $(\mathcal{G}, \boldsymbol{p})$ be a $d$-dimensional framework. Then, $(\mathcal{G}, \boldsymbol{p})$ is \textit{IBR} if and only if the \textit{rigidity eigenvalue} $\lambda_{d+2}(B_{\mathcal{G}}(\boldsymbol{p})) > 0$.
    \label{th:inf_rigidity}
\end{theorem}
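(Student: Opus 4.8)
The plan is to reduce the spectral claim about the rigidity eigenvalue to the rank condition of Definition \ref{def:inf_rigidity}, exploiting the factorization $B_{\mathcal{G}}(\boldsymbol{p}) = R_{\mathcal{G}}(\boldsymbol{p})^\mathsf{T} (W \otimes I_d) R_{\mathcal{G}}(\boldsymbol{p})$ together with the positive definiteness of $W \otimes I_d$. The latter holds because the diagonal entries of $W$ are $w_{ij} d_{ij}^2$ with $w_{ij} > 0$, and $d_{ij} > 0$ for every edge since $\boldsymbol{p}$ is an injective realization, so no edge collapses.

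The first step is to establish the null-space identity $\mathrm{null}(B_{\mathcal{G}}(\boldsymbol{p})) = \mathrm{null}(R_{\mathcal{G}}(\boldsymbol{p}))$. The inclusion $\supseteq$ is immediate from the factorization. For $\subseteq$, let $M = (W \otimes I_d)^{1/2}$, which is invertible; if $B_{\mathcal{G}}(\boldsymbol{p}) x = 0$ then $x^\mathsf{T} B_{\mathcal{G}}(\boldsymbol{p}) x = \Vert M R_{\mathcal{G}}(\boldsymbol{p}) x \Vert^2 = 0$, hence $M R_{\mathcal{G}}(\boldsymbol{p}) x = 0$ and therefore $R_{\mathcal{G}}(\boldsymbol{p}) x = 0$. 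Consequently $\dim \mathrm{null}(B_{\mathcal{G}}(\boldsymbol{p})) = \dim \mathrm{null}(R_{\mathcal{G}}(\boldsymbol{p}))$, and by rank–nullity $\mathrm{rank}(B_{\mathcal{G}}(\boldsymbol{p})) = \mathrm{rank}(R_{\mathcal{G}}(\boldsymbol{p}))$.

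The second step pins down the bottom of the spectrum of $B_{\mathcal{G}}(\boldsymbol{p})$. Since $T(\boldsymbol{p}) \subseteq \mathrm{null}(R_{\mathcal{G}}(\boldsymbol{p})) = \mathrm{null}(B_{\mathcal{G}}(\boldsymbol{p}))$ with $\dim T(\boldsymbol{p}) = d+1$, and $B_{\mathcal{G}}(\boldsymbol{p})$ is positive semidefinite with eigenvalues in increasing order, it follows that $\lambda_1(B_{\mathcal{G}}(\boldsymbol{p})) = \cdots = \lambda_{d+1}(B_{\mathcal{G}}(\boldsymbol{p})) = 0$. Hence $\lambda_{d+2}(B_{\mathcal{G}}(\boldsymbol{p})) > 0$ if and only if the zero eigenvalue has multiplicity exactly $d+1$, i.e. $\dim \mathrm{null}(B_{\mathcal{G}}(\boldsymbol{p})) = d+1$, equivalently $\dim \mathrm{null}(R_{\mathcal{G}}(\boldsymbol{p})) = d+1$. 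To close the loop: because $T(\boldsymbol{p})$ is a subspace of $\mathrm{null}(R_{\mathcal{G}}(\boldsymbol{p}))$, the equality of dimensions $\dim \mathrm{null}(R_{\mathcal{G}}(\boldsymbol{p})) = d+1 = \dim T(\boldsymbol{p})$ forces $\mathrm{null}(R_{\mathcal{G}}(\boldsymbol{p})) = T(\boldsymbol{p})$, which is precisely IBR; the converse is immediate. (Equivalently, via rank–nullity, this matches $\mathrm{rank}(R_{\mathcal{G}}(\boldsymbol{p})) = d|\mathcal{V}| - d - 1$.)

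The argument is essentially routine bookkeeping with positive semidefinite spectra and rank–nullity; the only point that genuinely requires care is the null-space identity in the first step, where positive definiteness of $W \otimes I_d$ — resting on the strict positivity of each $w_{ij} d_{ij}^2$, and hence on $\boldsymbol{p}$ being an injective realization — is what makes $\mathrm{null}(B_{\mathcal{G}}(\boldsymbol{p}))$ and $\mathrm{null}(R_{\mathcal{G}}(\boldsymbol{p}))$ coincide rather than merely nest.
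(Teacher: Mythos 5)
Your argument is correct: the factorization $B_{\mathcal{G}}(\boldsymbol{p}) = R_{\mathcal{G}}(\boldsymbol{p})^\mathsf{T}(W \otimes I_d)R_{\mathcal{G}}(\boldsymbol{p})$ with $W \otimes I_d$ positive definite gives $\mathrm{null}(B_{\mathcal{G}}(\boldsymbol{p})) = \mathrm{null}(R_{\mathcal{G}}(\boldsymbol{p}))$, and counting zero eigenvalues against $\dim T(\boldsymbol{p}) = d+1$ closes the equivalence. The paper itself states this theorem without proof (deferring to the cited reference), and your derivation is precisely the standard argument used there, so there is nothing further to reconcile.
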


The following lemma, used in Section \ref{sec:subframework}, follows directly from Definition \ref{def:bearing_laplacian} and basic algebraic manipulation.
\begin{lemma}
    Let $(\mathcal{G}, \boldsymbol{p})$ be a $d$-dimensional framework and $\boldsymbol{u} = (u_i)_{i \in \mathcal{V}} \in \mathbb{R}^{d|\mathcal{V}|}$. 
    Then, 
    \begin{enumerate}
        \item $\boldsymbol{u}^\mathsf{T} B_{\mathcal{G}}(\boldsymbol{p}) \boldsymbol{u} = \textstyle{\sum_{\{i, j\} \in \mathcal{E}}} w_{ij} (u_i - u_j)^\mathsf{T} P_{ij} (u_i - u_j)$;
        \item $B_{\mathcal{G}}(\boldsymbol{p}) \boldsymbol{u} = 0 \Leftrightarrow P_{ij} (u_i - u_j) = 0$ for all $\{i, j\} \in \mathcal{E}$.
    \end{enumerate}
    \label{lem:laplacian_prop}
\end{lemma}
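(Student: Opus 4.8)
The plan is to verify both items by direct expansion, exactly as the statement advertises. For item (1), I would write the quadratic form block-wise as $\boldsymbol{u}^\mathsf{T} B_{\mathcal{G}}(\boldsymbol{p})\boldsymbol{u} = \sum_{i,j\in\mathcal{V}} u_i^\mathsf{T}[B_{\mathcal{G}}(\boldsymbol{p})]_{i,j}u_j$ and split the sum into the diagonal blocks ($i=j$) and the off-diagonal blocks. Using Definition \ref{def:bearing_laplacian}, the diagonal part contributes $\sum_{i\in\mathcal{V}}\sum_{k\in\mathcal{N}_i} w_{ik}\,u_i^\mathsf{T}P_{ik}u_i$, which---since each undirected edge $\{i,j\}$ appears once in $\mathcal{N}_i$ and once in $\mathcal{N}_j$---equals $\sum_{\{i,j\}\in\mathcal{E}} w_{ij}\big(u_i^\mathsf{T}P_{ij}u_i + u_j^\mathsf{T}P_{ij}u_j\big)$. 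The off-diagonal part contributes $-\sum_{\{i,j\}\in\mathcal{E}} w_{ij}\big(u_i^\mathsf{T}P_{ij}u_j + u_j^\mathsf{T}P_{ij}u_i\big) = -2\sum_{\{i,j\}\in\mathcal{E}} w_{ij}\,u_i^\mathsf{T}P_{ij}u_j$, where I use $P_{ij}=P_{ji}=P_{ij}^\mathsf{T}$. Adding the two parts and completing the square inside each summand yields $\sum_{\{i,j\}\in\mathcal{E}} w_{ij}(u_i-u_j)^\mathsf{T}P_{ij}(u_i-u_j)$, which is item (1).

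For item (2), the direction ($\Leftarrow$) follows without even invoking item (1): reading off the $i$-th block of $B_{\mathcal{G}}(\boldsymbol{p})\boldsymbol{u}$ from Definition \ref{def:bearing_laplacian} gives $\big(\sum_{k\in\mathcal{N}_i} w_{ik}P_{ik}\big)u_i - \sum_{j\in\mathcal{N}_i} w_{ij}P_{ij}u_j = \sum_{k\in\mathcal{N}_i} w_{ik}P_{ik}(u_i-u_k)$, so if $P_{ik}(u_i-u_k)=0$ for every edge then every block of $B_{\mathcal{G}}(\boldsymbol{p})\boldsymbol{u}$ vanishes. For ($\Rightarrow$), if $B_{\mathcal{G}}(\boldsymbol{p})\boldsymbol{u}=0$ then a fortiori $\boldsymbol{u}^\mathsf{T}B_{\mathcal{G}}(\boldsymbol{p})\boldsymbol{u}=0$, so by item (1) the sum $\sum_{\{i,j\}\in\mathcal{E}} w_{ij}(u_i-u_j)^\mathsf{T}P_{ij}(u_i-u_j)$ is zero. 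Each summand is nonnegative because $w_{ij}>0$ and $P_{ij}$ is positive semidefinite, hence each summand is zero; and since $P_{ij}$ is an orthogonal projection, $P_{ij}=P_{ij}^\mathsf{T}P_{ij}$, so $(u_i-u_j)^\mathsf{T}P_{ij}(u_i-u_j)=\Vert P_{ij}(u_i-u_j)\Vert^2=0$ forces $P_{ij}(u_i-u_j)=0$ for every $\{i,j\}\in\mathcal{E}$.

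There is essentially no hard step here; the only points requiring care are the bookkeeping of undirected edges (each edge is counted in two neighbor lists, producing the factor that makes the square come out cleanly) and the use of the projection identity $P_{ij}=P_{ij}^\mathsf{T}P_{ij}$ to pass from the vanishing of the scalar form $(u_i-u_j)^\mathsf{T}P_{ij}(u_i-u_j)$ to the vanishing of the vector $P_{ij}(u_i-u_j)$. As an alternative to the block expansion in item (1), one could instead substitute the factorization $B_{\mathcal{G}}(\boldsymbol{p})=R_{\mathcal{G}}(\boldsymbol{p})^\mathsf{T}(W\otimes I_d)R_{\mathcal{G}}(\boldsymbol{p})$ and expand $\Vert(W\otimes I_d)^{1/2}R_{\mathcal{G}}(\boldsymbol{p})\boldsymbol{u}\Vert^2$ using the known row structure of the bearing rigidity matrix, but the direct block computation is shorter and self-contained.
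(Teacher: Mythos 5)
Your proof is correct and is precisely the ``basic algebraic manipulation'' from Definition~\ref{def:bearing_laplacian} that the paper invokes without writing out: block expansion and completion of the square for item (1), and for item (2) the combination of nonnegativity of each summand with the projection identity $P_{ij}=P_{ij}^\mathsf{T}P_{ij}$. No gaps; the edge bookkeeping and the use of $w_{ij}>0$ are both handled properly.
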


\section{Subframework-based Bearing Rigidity Theory}
\label{sec:subframework}

\subsection{Subframeworks}
The primary challenge in implementing rigidity maintenance controllers is the inherently global nature of this property: removing even a single edge may cause a framework to lose its rigidity.
In consequence, to enforce rigidity maintenance, each robot must engage in a network-wide information exchange to obtain global parameters such as the rigidity eigenvalue.
This can be problematic, especially as the network size increases.
For this reason, adapting the study of rigidity to account for subsets of nearby robots forming rigid \textit{subframeworks} is a natural step.
To mitigate the combinatorial complexity associated with the choice of these subframeworks, we restrict the analysis to subsets formed by a vertex and those within a fixed distance (in number of edges).
\begin{definition}
    Let $\mathcal{F}$ be a framework, $i \in \mathcal{V}$ and $r \in \mathbb{N}$.
    The \textit{ball subframework} of radius $r$ with center $i$, denoted $\mathcal{F}_{i, r}$, is the one having vertex set $\mathcal{V}_{i, r} = \{j \in \mathcal{V} : \delta_{ij} \leq r\}$, edge set $\mathcal{E}_ {i, r} = \mathcal{E} \cap (\mathcal{V}_{i, r} \times \mathcal{V}_{i, r})$, and realization $\boldsymbol{p}_{i, r} = (p_j)_{j \in \mathcal{V}_{i, r}}$.
    \label{def:ball_subframework}
\end{definition}

Ball subframeworks proved to be convenient to design decentralized control strategies in the distance rigidity setting.
In \cite{Amani2020} $1$-hop balls ($r = 1$) were used to implement a rigidity recovery protocol.
In \cite{PresenzaACC2022}, the definition was extended to balls of arbitrary radii, allowing its application to arbitrary connected frameworks.

\subsection{A New Approach to Bearing Rigidity}
In this section, we present a subframework-based necessary and sufficient condition for bearing rigidity.
We start by considering the union of frameworks with common vertices.
Consider two frameworks $\mathcal{F} = (\mathcal{G}, \boldsymbol{p})$, $\mathcal{F}' = (\mathcal{G}', \boldsymbol{p}')$ such that $p_i = p'_i$ for each $i \in \mathcal{V} \cap \mathcal{V}'$.
Then $\mathcal{F} \cup \mathcal{F}'$ is the framework having vertex set $\mathcal{V} \cup \mathcal{V}'$, edge set $\mathcal{E} \cup \mathcal{E}'$, and realization $(p_i)_{i \in \mathcal{V} \cup \mathcal{V}'}$.

\begin{lemma}
    Let $\mathcal{F}$ and $\mathcal{F}'$ be two infinitesimally bearing rigid frameworks in $\mathbb{R}^d$ with $|\mathcal{V}|, |\mathcal{V}'| \geq 2$.
    % Assume that $p_i = p'_i$ for each $i \in \mathcal{V} \cap \mathcal{V}'$.
    Then, $\mathcal{F} \cup \mathcal{F}'$ is infinitesimally bearing rigid if and only if $|\mathcal{V} \cap \mathcal{V}'| \geq 2$.
    \label{lem:subframework_union}
\end{lemma}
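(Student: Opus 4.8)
The statement is an if-and-only-if about when the union of two IBR frameworks remains IBR, hinging entirely on the size of the shared vertex set. My plan is to characterize the null space of the bearing rigidity matrix (equivalently, the bearing Laplacian) of $\mathcal{F} \cup \mathcal{F}'$ in terms of the null spaces of the two pieces, and then count dimensions against $d+1$, the dimension of $T(\boldsymbol{p})$ required by Definition~\ref{def:inf_rigidity}. Throughout I will use Theorem~\ref{th:inf_rigidity} to move freely between "IBR" and "$\lambda_{d+2}(B) > 0$", and Lemma~\ref{lem:laplacian_prop}(2), which says $B_{\mathcal{G}}(\boldsymbol{p})\boldsymbol{u} = 0$ iff $P_{ij}(u_i - u_j) = 0$ for every edge $\{i,j\}$, so that membership in a null space is a purely edge-local (hence subframework-local) condition.

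\textbf{Forward direction (contrapositive).} Suppose $|\mathcal{V} \cap \mathcal{V}'| \leq 1$. If the intersection is empty the union is disconnected, and a disconnected framework is never IBR (each component can be translated independently, so $\dim\mathrm{null}(R) \geq 2d > d+1$ when $d \geq 2$). If $|\mathcal{V} \cap \mathcal{V}'| = 1$, say the shared vertex is $v$, then any $\boldsymbol{u}$ that restricts to an infinitesimal trivial motion on $\mathcal{F}$ and (independently) to an infinitesimal trivial motion on $\mathcal{F}'$ lies in $\mathrm{null}(R_{\mathcal{F}\cup\mathcal{F}'})$, by the edge-local criterion — no edge of $\mathcal{E} \cup \mathcal{E}'$ straddles the two vertex sets beyond $v$. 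On the $\mathcal{F}$ side trivial motions form a $(d+1)$-dimensional space, on the $\mathcal{F}'$ side another $(d+1)$-dimensional space, and the only constraint linking them is agreement of the velocity at $v$, which is $d$ linear conditions. A dimension count then gives $\dim\mathrm{null}(R_{\mathcal{F}\cup\mathcal{F}'}) \geq (d+1) + (d+1) - d = d+2 > d+1$, so the union is not IBR. (One must check the two trivial-motion spaces are "independent enough" for the count to be exact as a lower bound; this is immediate since imposing $d$ conditions drops dimension by at most $d$.)

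\textbf{Reverse direction.} Assume $|\mathcal{V} \cap \mathcal{V}'| \geq 2$; pick two shared vertices $a, b$. Let $\boldsymbol{u} \in \mathrm{null}(R_{\mathcal{F}\cup\mathcal{F}'})$. By the edge-local criterion, the restriction $\boldsymbol{u}|_{\mathcal{F}}$ lies in $\mathrm{null}(R_{\mathcal{F}})$, which equals $T(\boldsymbol{p})$ since $\mathcal{F}$ is IBR, so $\boldsymbol{u}|_{\mathcal{F}} = \mathbf{1}\otimes c + s\,\boldsymbol{p}|_{\mathcal{F}}$ for some $c \in \mathbb{R}^d$, $s \in \mathbb{R}$; likewise $\boldsymbol{u}|_{\mathcal{F}'} = \mathbf{1}\otimes c' + s'\,\boldsymbol{p}|_{\mathcal{F}'}$. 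Evaluating both expressions at $a$ and at $b$ gives $c + s\,p_a = c' + s'\,p_a$ and $c + s\,p_b = c' + s'\,p_b$; subtracting, $(s - s')(p_a - p_b) = 0$, and since $a \neq b$ with an injective realization $p_a \neq p_b$, so $s = s'$ and then $c = c'$. Hence the two restrictions are induced by a single global trivial motion of $\mathcal{F}\cup\mathcal{F}'$, i.e. $\boldsymbol{u} \in T\big((p_i)_{i\in\mathcal{V}\cup\mathcal{V}'}\big)$. Combined with the always-true inclusion $T \subseteq \mathrm{null}(R)$, we get equality, which is exactly IBR of the union.

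\textbf{Main obstacle.} The only genuinely delicate point is the forward direction's dimension count: establishing that when only one vertex is shared the null space strictly exceeds dimension $d+1$. The reverse direction is essentially bookkeeping once the edge-local null-space criterion (Lemma~\ref{lem:laplacian_prop}) and the IBR characterization (Definition~\ref{def:inf_rigidity}) are in hand; the subtlety there is merely to remember that $|\mathcal{V}|,|\mathcal{V}'| \geq 2$ is what guarantees the two candidate shared vertices $a,b$ actually differ and that the restricted frameworks are themselves nontrivial (so that "IBR" is meaningful for each piece). I would also make explicit the — intuitively obvious but worth stating — fact that injectivity of the global realization $\boldsymbol{p}$ on $\mathcal{V}\cup\mathcal{V}'$ follows from compatibility $p_i = p_i'$ plus injectivity of each piece, since otherwise $p_a = p_b$ could sabotage the $(s-s')(p_a-p_b)=0$ step.
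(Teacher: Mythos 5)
Your proof is correct and follows essentially the same route as the paper: the reverse direction (gluing the two trivial motions via two shared vertices and using injectivity to force $s=s'$, $c=c'$) is identical, and the forward direction differs only in that you establish non-rigidity for a single shared vertex by a dimension count ($\geq d+2 > d+1$) where the paper exhibits an explicit non-trivial null vector ($u_i=0$ on $\mathcal{V}$, $u_i = p_i - p_a$ on $\mathcal{V}'$). Both versions are valid; your parenthetical cautions (exactness of the count, and that only $p_a \neq p_b$ for the two \emph{shared} vertices is actually needed from injectivity) are correctly resolved.
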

\begin{proof}
    \textit{Sufficiency.} Notice that if $\mathcal{V} \cap \mathcal{V}' = \emptyset$, then $\mathcal{F} \cup \mathcal{F}'$ is not connected, thus not \textit{IBR}.
    On the other hand, suppose that $\mathcal{V} \cap \mathcal{V}' = \{a\}$.
    Define $\boldsymbol{u} = (u_i)_{\in \mathcal{V} \cup \mathcal{V}'}$ such that $u_i = 0$ if $i \in \mathcal{V}$ and $u_i = p_i - p_{a}$ if $i \in \mathcal{V}'$.
    Observe that $P_{ij}(u_i - u_j) = 0$ for all $\{i, j\} \in \mathcal{E} \cup \mathcal{E}'$; hence, $\boldsymbol{u} \in \mathrm{null}(B_{\mathcal{G} \cup \mathcal{G}'}(\boldsymbol{p}))$.
    Also, that $\boldsymbol{u} \notin T(\boldsymbol{p})$.
    Therefore $\mathcal{F} \cup \mathcal{F}'$ is not \textit{IBR}. 
    \textit{Necessity.} Assume that $|\mathcal{V} \cap \mathcal{V}'| \geq 2$. Consider $\boldsymbol{u} = (u_i)_{i \in \mathcal{V}}$ such that $B_{\mathcal{G} \cup \mathcal{G}'}(\boldsymbol{p}) \boldsymbol{u} = 0$. 
    From Lemma \ref{lem:laplacian_prop}, it follows that $B_{\mathcal{G}}(\boldsymbol{p}) \boldsymbol{u} = B_{\mathcal{G}'}(\boldsymbol{p}) \boldsymbol{u} = 0$.
    Since $\mathcal{F}, \mathcal{F}'$ are \textit{IBR}, then there are scalars $\alpha, \alpha'$ and vectors $v, v' \in \mathbb{R}^d$ such that $u_i = \alpha p_i + v$ if $i \in \mathcal{V}$ and $u_i = \alpha' p_i + v'$ if $i \in \mathcal{V}'$.
    Consider distinct $a, b \in \mathcal{V} \cap \mathcal{V}'$.
    Then, it must hold that $u_a = \alpha p_a + v = \alpha' p_a + v', u_b = \alpha p_b + v = \alpha' p_b + v'$ which leads to the system of equations
    \begin{equation*}
        (\alpha - \alpha') p_a  + (v - v') = (\alpha - \alpha') p_b  + (v - v') = 0.
    \end{equation*}
    If $\alpha \neq \alpha'$ then $p_a = p_b$, contradicting the injective realization hypothesis. Hence $\alpha = \alpha'$, which implies $v = v'$.
    Therefore $\boldsymbol{u} \in T(\boldsymbol{p})$. We conclude that $\mathcal{F} \cup \mathcal{F}'$ is \textit{IBR}.
\end{proof}

\begin{definition}
    Let $\mathcal{F}$ be a $d$-dimensional framework and $i \in \mathcal{V}$.
    The \textit{minimal radius} associated with $i$ is
    \begin{equation}
        r^*_i = \inf \{r \in \mathbb{N} : \lambda_{d+2}(\mathcal{F}_{i, r}) > 0\}.
        \label{eq:minimal_radius}
    \end{equation}
    If $r^*_i$ is finite, $\mathcal{F}_{i, r^*_i}$ is called the \textit{minimal ball subframework}.
    \label{def:minimal_radius}
\end{definition}

For the rest of this paper, let us denote $\mathcal{F}_{i, r^*_i}$ simply as $\mathcal{F}_i$.
When $r^*_i$ is finite, $\mathcal{F}_i$ corresponds to the smallest infinitesimally rigid ball subframework centered at $i$.

\begin{theorem}[Subframework-based Rigidity]
    Let $\mathcal{F}$ be a connected $d$-dimensional framework. Then, $\mathcal{F}$ is infinitesimally bearing rigid if and only if $r^*_i$ \eqref{eq:minimal_radius} is finite for all $i \in \mathcal{V}$.
    \label{th:subframework_based_rigidity}
\end{theorem}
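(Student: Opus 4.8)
The plan is to prove the two implications separately. Necessity of the finiteness of all minimal radii is immediate, so I would dispatch it first: assume $\mathcal{F}$ is \textit{IBR} and fix $i \in \mathcal{V}$. Since $\mathcal{F}$ is connected and finite, its diameter $\Delta(\mathcal{G})$ is finite, and taking $r = \Delta(\mathcal{G})$ gives $\mathcal{V}_{i,r} = \mathcal{V}$ and $\mathcal{E}_{i,r} = \mathcal{E}$, i.e. $\mathcal{F}_{i,\Delta(\mathcal{G})} = \mathcal{F}$. By Theorem \ref{th:inf_rigidity}, $\lambda_{d+2}(\mathcal{F}_{i,\Delta(\mathcal{G})}) > 0$, so the set appearing in \eqref{eq:minimal_radius} is a nonempty subset of $\mathbb{N}$ and hence $r^*_i \leq \Delta(\mathcal{G}) < \infty$.

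For sufficiency, assume $r^*_i$ is finite for every $i$, so each $\mathcal{F}_i = \mathcal{F}_{i,r^*_i}$ is a well-defined \textit{IBR} ball subframework, inheriting an injective realization from $\boldsymbol{p}$ (a subvector of $\boldsymbol{p}$ is injective). I would first record two elementary facts. First, $r^*_i \geq 1$ for all $i$: since $d \geq 2$, the bearing Laplacian of a single-vertex framework is only $d \times d$, so $\lambda_{d+2}(\mathcal{F}_{i,0})$ does not exist and $r=0$ cannot lie in the set in \eqref{eq:minimal_radius}; consequently $\mathcal{V}_i \supseteq \mathcal{V}_{i,1} = \{i\} \cup \mathcal{N}_i$, so $\mathcal{F}_i$ contains every neighbor of $i$. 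Second, $\mathcal{F} = \bigcup_{i \in \mathcal{V}} \mathcal{F}_i$: every vertex $j$ lies in $\mathcal{V}_j$, and for every edge $\{i,j\}$ one has $\delta_{ij} = 1 \leq r^*_i$, so $\{i,j\} \in \mathcal{E}_i$.

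The core of the argument is an induction over a carefully chosen vertex ordering. Because $\mathcal{F}$ is connected, order $\mathcal{V} = \{v_1, \dots, v_n\}$ so that every $v_k$ with $k \geq 2$ is adjacent in $\mathcal{G}$ to some $v_j$ with $j < k$ (e.g.\ a breadth-first ordering rooted at $v_1$). Set $\mathcal{H}_k = \mathcal{F}_{v_1} \cup \dots \cup \mathcal{F}_{v_k}$ and prove by induction that each $\mathcal{H}_k$ is \textit{IBR}. The base case $\mathcal{H}_1 = \mathcal{F}_{v_1}$ holds by hypothesis and has at least two vertices. For the inductive step, write $\mathcal{H}_k = \mathcal{H}_{k-1} \cup \mathcal{F}_{v_k}$; both pieces are \textit{IBR} with at least two vertices, so by Lemma \ref{lem:subframework_union} it suffices to exhibit two shared vertices. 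I would take $v_j$ and $v_k$, where $v_j$ is the earlier neighbor of $v_k$: $v_j$ lies in the vertex set of $\mathcal{H}_{k-1}$ since $j < k$, and $v_k$ lies there too because $\{v_j, v_k\} \in \mathcal{E}$ together with $r^*_{v_j} \geq 1$ forces $v_k \in \mathcal{V}_{v_j}$; on the other side, $v_k$ lies in the vertex set of $\mathcal{F}_{v_k}$ trivially, and $v_j$ lies there because $v_j$ is a neighbor of $v_k$ and $\mathcal{F}_{v_k}$ contains all neighbors of its center. As $v_j \neq v_k$, the two subframeworks share at least two vertices, so $\mathcal{H}_k$ is \textit{IBR}. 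Finally $\mathcal{H}_n = \bigcup_{i} \mathcal{F}_i = \mathcal{F}$ by the second fact above, so $\mathcal{F}$ is \textit{IBR}.

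The step I expect to be the main obstacle is verifying the ``overlap of size at least two'' hypothesis of Lemma \ref{lem:subframework_union} at every stage of the induction; this is precisely what forces the breadth-first ordering (to guarantee an edge from $v_k$ back into the already-assembled part) and the bound $r^*_i \geq 1$ (to guarantee that a minimal ball subframework already contains all neighbors of its center, so both endpoints of that edge lie in both pieces). The remaining ingredients — finiteness of the diameter, injectivity of the sub-realizations, and the identity $\mathcal{F} = \bigcup_i \mathcal{F}_i$ — are routine bookkeeping.
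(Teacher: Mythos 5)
Your proof is correct and follows essentially the same route as the paper: the easy direction uses $\mathcal{F}_{i,r}=\mathcal{F}$ for large enough $r$, and the substantive direction builds $\mathcal{F}$ as an iterated union of the minimal ball subframeworks along a spanning-tree/BFS order, invoking Lemma \ref{lem:subframework_union} at each step. Your write-up is in fact somewhat more careful than the paper's, since you explicitly verify the two-vertex overlap at every stage (via $r^*_i \geq 1$) and check that $\bigcup_i \mathcal{F}_i = \mathcal{F}$, details the paper leaves implicit.
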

\begin{proof}
\textit{Sufficiency.} Since $\mathcal{F}$ is connected, for all $i$ there exists $h$ such that $\mathcal{F}_{i, h} = \mathcal{F}$, and $\lambda_{d+2}(\mathcal{F})> 0$ by hypothesis, hence $r^*_i \leq h$ holds.
\textit{Necessity.} 
By hypothesis, the minimal ball subframework $\mathcal{F}_{i} = (\mathcal{G}_i, \boldsymbol{p}_i)$ is \textit{IBR} for each $i \in \mathcal{V}$.
We prove that $\mathcal{F}$ is \textit{IBR} by the following argument.
First, note that given two adjacent vertices $i, j$, it holds that $|\mathcal{V}_i|, |\mathcal{V}_j| \geq 2$ and that $|\mathcal{V}_i \cap \mathcal{V}_j| \geq 2$ since $i, j \in \mathcal{V}_i \cap \mathcal{V}_j$.
By Lemma \ref{lem:subframework_union} it follows that $\mathcal{F}_i \cup \mathcal{F}_j$ is \textit{IBR}.
Since $\mathcal{F}$ is connected, repeating this process for each edge in a spanning tree results in a framework that is \textit{IBR} and equal to $\mathcal{F}$.
\end{proof}

\subsection{Comparison with Distance Rigidity}

Distance rigidity theory is a well-established field; see, for instance, \cite[Chap. 61-63]{Goodman2017}.
However, infinitesimal distance rigidity (\textit{IDR}), which can be tested using the distance rigidity matrix, is not sufficient for distance-based localizability.
According to \cite{Zhao2019CSM}, bearing rigidity offers advantages because each measurement imposes $(d-1)$ constraints, compared to $1$ per distance measurement. For $d=2$, infinitesimal distance and bearing rigidity are equivalent, but not for $d \geq 3$. Theorem \ref{th:idr_implies_ibr} clarifies their relationship in higher dimensions.

\begin{theorem}
    Let $\mathcal{F} = (\mathcal{G}, \boldsymbol{p})$ be a $d$-dimensional framework. If $\mathcal{F}$ is infinitesimally \textit{distance} rigid, then, with probability $1$, it is also infinitesimally \textit{bearing} rigid.
    \label{th:idr_implies_ibr}
\end{theorem}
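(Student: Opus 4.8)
The plan is to reduce the claim to a statement about generic realizations and then compare the kernels of the bearing and distance rigidity matrices.

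\textbf{Genericity reduction.} The distance rigidity matrix attains its maximal rank, $d|\mathcal{V}|-\binom{d+1}{2}$, on a dense full-measure set of realizations, and $R_{\mathcal{G}}(\boldsymbol{p})$ likewise attains rank $d|\mathcal{V}|-d-1$ on a full-measure set; thus being \textit{IDR} (resp.\ \textit{IBR}) is a generic property of $\mathcal{G}$. If no realization of $\mathcal{G}$ is \textit{IDR} there is nothing to prove, so assume $\mathcal{G}$ is generically \textit{IDR}; it then suffices to show $\mathcal{G}$ is generically \textit{IBR}, since in that case the set of realizations that are \textit{IDR} but not \textit{IBR} is contained in the measure-zero set where $R_{\mathcal{G}}$ fails to be of maximal rank. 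Hence I fix a generic realization $\boldsymbol{p}$ of a connected $\mathcal{G}$ that is \textit{IDR} and aim to show $\mathrm{null}(R_{\mathcal{G}}(\boldsymbol{p}))=T(\boldsymbol{p})$.

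\textbf{Flex comparison.} Since $B_{\mathcal{G}}(\boldsymbol{p})=R_{\mathcal{G}}(\boldsymbol{p})^{\mathsf{T}}(W\otimes I_d)R_{\mathcal{G}}(\boldsymbol{p})$ with $W$ positive definite, Lemma~\ref{lem:laplacian_prop} gives $\boldsymbol{u}\in\mathrm{null}(R_{\mathcal{G}}(\boldsymbol{p}))$ iff $P_{ij}(u_i-u_j)=0$ for all $\{i,j\}\in\mathcal{E}$, i.e.\ iff there are scalars $c_{ij}=c_{ji}$ with $u_i-u_j=c_{ij}(p_i-p_j)$; such $\boldsymbol{u}$ lies in $T(\boldsymbol{p})$ exactly when all the $c_{ij}$ coincide, since then $u_i-c\,p_i$ is constant on the connected graph. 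On the other hand $\boldsymbol{u}$ is an infinitesimal distance-preserving motion iff $(p_i-p_j)^{\mathsf{T}}(u_i-u_j)=0$ along every edge, so a vector that is both a bearing flex and a distance flex must be a translation. Using \textit{IDR} to identify the distance-flex space with the $\binom{d+1}{2}$-dimensional space of rigid-body motions $\{S\boldsymbol{p}+(1_{|\mathcal{V}|}\otimes I_d)v:S+S^{\mathsf{T}}=0\}$, the sum $\mathcal{K}$ of the bearing- and distance-flex spaces satisfies $\dim\mathrm{null}(R_{\mathcal{G}}(\boldsymbol{p}))=\dim\mathcal{K}-\binom{d+1}{2}+d$, so $\mathrm{null}(R_{\mathcal{G}}(\boldsymbol{p}))=T(\boldsymbol{p})$ is equivalent to $\dim\mathcal{K}=\binom{d+1}{2}+1$, i.e.\ to $\mathcal{K}$ exceeding the rigid-body motions by exactly the one scaling direction $\boldsymbol{p}$.

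\textbf{Conclusion and the obstacle.} For $d=2$ this is immediate: a right-angle rotation applied blockwise, $u_i\mapsto Ju_i$, carries vectors orthogonal to $p_i-p_j$ to vectors parallel to it, hence is a linear isomorphism from distance flexes onto bearing flexes, so the two kernels have equal dimension and \textit{IDR}$\,\Leftrightarrow\,$\textit{IBR} holds even pointwise. For $d\geq3$ there is no such isometry, and the heart of the matter---which I expect to be the main difficulty---is to show that the $(d-1)$-dimensional orthogonality constraints of a generic \textit{IDR} framework force the edge coefficients $c_{ij}$ of every bearing flex to be constant. This cannot come from the linear algebra above alone, because generic distance rigidity and generic direction (equivalently, bearing) rigidity are distinct matroidal conditions on $\mathcal{G}$ when $d\geq3$; to close it I would either invoke the classical parallel-drawing result that generic distance rigidity in $\mathbb{R}^d$ implies generic direction rigidity, or argue directly that a bearing flex of a generic \textit{IDR} framework is affine, $u_i=Ap_i+v$, and then use that the edge vectors of such a framework span $\mathbb{R}^d$ in sufficiently general position that any matrix admitting all of them as eigenvectors must be scalar, which forces $\boldsymbol{u}\in T(\boldsymbol{p})$.
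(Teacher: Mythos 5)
Your genericity reduction and your $d=2$ argument (the blockwise rotation $u_i\mapsto Ju_i$ mapping distance flexes onto bearing flexes) are fine, and your dimension count for $\mathcal{K}$ is correct. But for $d\geq 3$ the proposal stops exactly where the actual content of the theorem begins: you state that the remaining step is "the main difficulty" and offer two possible ways to close it without carrying out either, so as written there is a genuine gap. The second option --- arguing that every bearing flex of a generic \textit{IDR} framework is affine, $u_i=Ap_i+v$ --- is unsupported and essentially circular: a bearing flex only gives you $u_i-u_j=c_{ij}(p_i-p_j)$ edge by edge, and there is no a priori reason these local coefficients assemble into a global affine map; showing that they do (equivalently, that all $c_{ij}$ coincide) is precisely what has to be proven, and generic \textit{IDR} does not hand you affineness of bearing flexes by any argument you have given.

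The first option --- invoking that generic distance rigidity implies generic direction (parallel) rigidity --- is the correct fact, but its proof is not a one-line classical citation independent of dimension; it is exactly the chain the paper uses. Concretely: generic \textit{IDR} in $\mathbb{R}^d$ implies generic \textit{IDR} in $\mathbb{R}^2$ (the dimension-reduction theorem for generic rigidity, \cite[Th.~63.2.11]{Goodman2017}); in $\mathbb{R}^2$, \textit{IDR} and \textit{IBR} coincide \cite[Th.~8]{Zhao2016TAC}; and generic \textit{IBR} is dimension-independent, so it lifts back to $\mathbb{R}^d$ \cite[Th.~7]{Zhao2016TAC}, holding for almost all realizations \cite[Lem.~2]{Zhao2017}. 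The only nontrivial ingredient here is the descent from $\mathbb{R}^d$ to $\mathbb{R}^2$ for \emph{distance} rigidity, which your proposal never identifies or justifies; until you either cite that result explicitly or prove it, the appeal to a "classical parallel-drawing result" is a black box whose content is the theorem itself. If you replace your third paragraph with this chain of citations, you recover the paper's proof; your kernel-comparison framework in the second paragraph is a reasonable alternative scaffolding but does not by itself supply the missing step.
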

\begin{proof}
    Since $(\mathcal{G}, \boldsymbol{p})$ is \textit{IDR} in $\mathbb{R}^d$, then there is a realization $\boldsymbol{p}' \in \mathbb{R}^2$, such that $(\mathcal{G}, \boldsymbol{p}')$ is also \textit{IDR}.
    (see \cite[Th. 63.2.11]{Goodman2017}).
    In $\mathbb{R}^2$, \textit{IDR} and \textit{IBR} are equivalent \cite[Th. 8]{Zhao2016TAC}.
    Moreover, since $(\mathcal{G}, \boldsymbol{p}')$ is \textit{IBR} in $\mathbb{R}^2$, there is a realization $\boldsymbol{p}'' \in \mathbb{R}^d$, for any $d \geq 3$, such that $(\mathcal{G}, \boldsymbol{p}'')$ is \textit{IBR} \cite[Th. 7]{Zhao2016TAC}.
    Finally, if $(\mathcal{G}, \boldsymbol{p}'')$ is \textit{IBR}, then it is so for ``almost all'' realizations, i.e., for a dense open set of realizations whose complement has measure zero \cite[Lem. 2]{Zhao2017}.
\end{proof}

Theorem \ref{th:idr_implies_ibr} establishes that the minimal ball subframeworks using bearings are `almost surely' at most as large as those based on distances. 
To validate this, we simulated random frameworks using the $(n, \rho)$-Erd\H{o}s R\'{e}nyi model.
We chose $\rho = 5 / (n-1)$ which induces an expected average vertex degree of $5$.
The vertex positions were sampled from a uniform distribution within the unit cube of $\mathbb{R}^3$.
\figref{fig:distance_vs_bearing} illustrates the average results obtained by sampling $50$ frameworks that are both \textit{IDR} and \textit{IBR} per $n$.
It follows that bearings induce smaller subframeworks: for distance-based rigidity, almost all $i$ have $r^*_i \leq 3$, and $r^*_i \leq 2$ was dominant ($\geq 50 \%$) up to $n \approx 25$; for bearing rigidity, $r^*_i \leq 2$ remained dominant ($\geq 50 \%$) up to $n \approx 50$.
This behavior was also observed for other average degree values, though results are omitted for brevity.

\begin{figure}[!tb]
    \centering
    \vspace*{5pt}
    \begin{subfigure}{0.49\columnwidth}
        \includegraphics[scale=0.85]{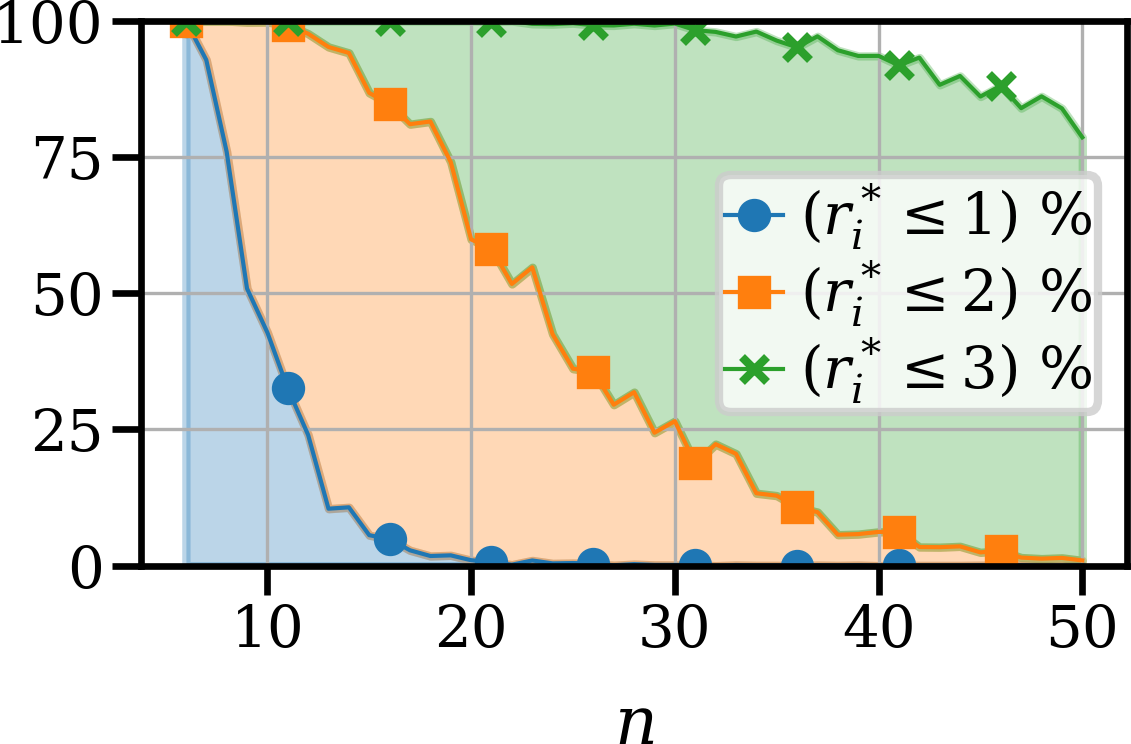}
        \caption{Distance}
        \label{fig:distance_extents}
    \end{subfigure}
    \begin{subfigure}{0.49\columnwidth}
        \includegraphics[scale=0.85]{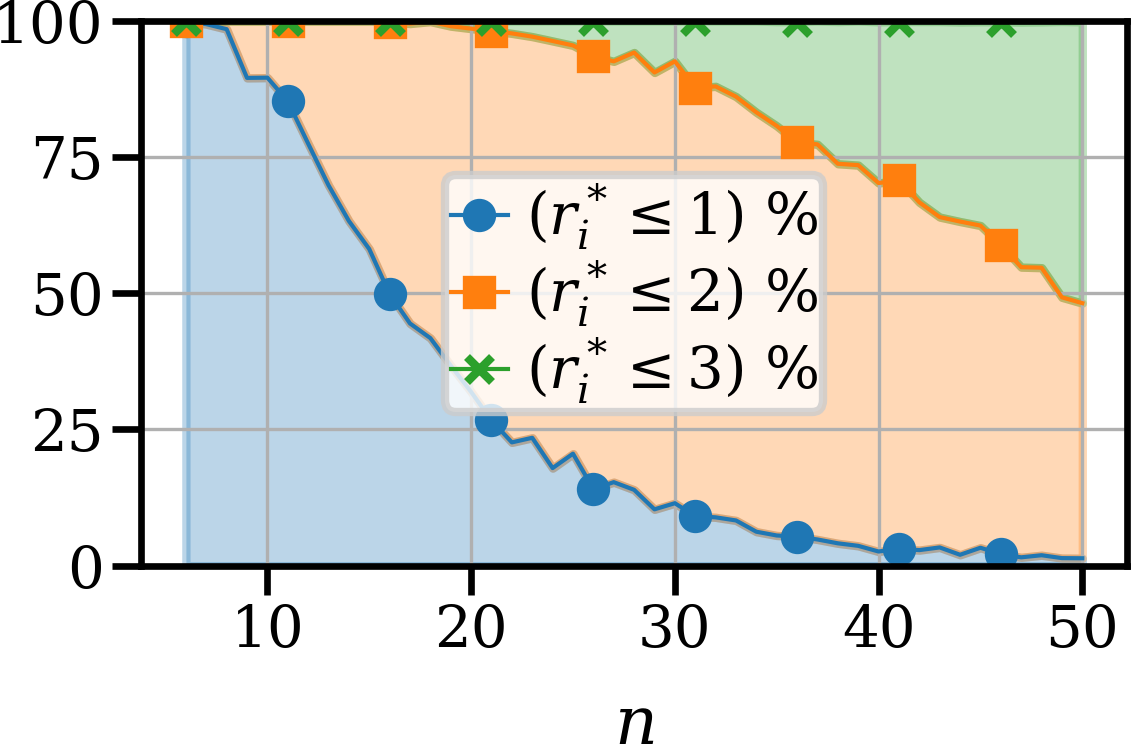}
        \caption{Bearing}
        \label{fig:bearing_extents}
    \end{subfigure}
    \caption{Percentage of subframeworks with minimal radius less than or equal to $k = 1, 2, 3$, versus the number of nodes, $n$.}
    \label{fig:distance_vs_bearing}
\end{figure}

\section{Decentralized Rigidity Maintenance Control}
\label{sec:control}

This section presents a decentralized bearing rigidity maintenance controller that leverages the proposed subframework formalism.
This scheme considers robots in $\mathbb{R}^d \times \mathcal{S}^1$, equipped with onboard cameras featuring limited range and field of view. 
The control objective is to allow a team of mobile robots to carry out arbitrary missions using bearing measurements.
To ensure that robots can reconstruct their relative positions, the controller must constrain the mission-specified trajectories to preserve bearing rigidity.

The proposed strategy is based on the minimization of cost functions defined over ${(\boldsymbol{p}, \boldsymbol{\psi})}$.
Namely, (a) $J_\mathrm{m}$ associated with the successful implementation of the desired mission (e.g., trajectory following), (b) $J_\mathrm{c}$ which addresses collision avoidance between robots, and (c) $J_\mathrm{r}$ to ensure rigidity maintenance.

\subsection{Collision Avoidance Cost Function}
To achieve collision avoidance, we consider neighboring agents in the communication graph $\mathcal{G}_c$, which guarantees the detection of nearby robots.
The chosen cost function is
\begin{equation}
    J_\mathrm{c}(\boldsymbol{p}) = \sum_{\{i, j\} \in \mathcal{E}_c} \, ((d_{ij} - \ell_c\,)/(d_{ij} - \ell_0))^2,
\label{eq:cost_ca}
\end{equation}
where $\ell_0$ ($< \ell_c\,$) indicates the minimum allowed distance.
This cost vanishes (with a vanishing derivative) as two neighbors approach the communication range.

\subsection{Rigidity Maintenance Cost Function}
\label{sec:cost_rm}
To incorporate into the control scheme the dependency of the sensing graph on ${(\boldsymbol{p}, \boldsymbol{\psi})}$, weights assigned to edges are used \eqref{eq:bearing_laplacian}.
These are determined via the sigmoid function
\begin{equation}
    \sigma_{s, m}(x) = (1 + \exp(s \cdot (m - x)))^{-1},
    % \label{eq:weights}
\end{equation}
where $s > 0$ adjusts the rate of decay, and $m$ the curve's midpoint.
Then, for each pair $i, j$ we set $w_{ij} = w_{R_{ij}} w_{F_{ij}} + w_{R_{ji}} w_{F_{ji}}$ where $w_{R_{ij}} = 1 - \sigma_{10, 0.9\ell_i}(d_{ij})$ and $w_{R_{ji}} = 1 - \sigma_{10, 0.9 \ell_j}(d_{ij})$ account for the limited sensing range; and $w_{F_{ij}} = \sigma_{40, 1.2\gamma_i}(n_i^\mathsf{T} b_{ij})$ and $w_{F_{ji}} = \sigma_{40, 1.2 \gamma_j}(n_j^\mathsf{T} b_{ji})$ consider the limited field of view.
Thus, $w_{ij}$ approaches 2 when both robots are within each other's field of view, $1$ when only one is, and approaches $0$ as both lie outside it.

Consider the sensing framework $(\mathcal{G}, \boldsymbol{p})$. For each $i \in \mathcal{V}$, let $\mathcal{F}_i = (\mathcal{G}_i, \boldsymbol{p}_i)$ be the minimal ball subframework, $\boldsymbol{B}_i \coloneqq B_{\mathcal{G}_i}(\boldsymbol{p}_i)$ its bearing Laplacian, and $\boldsymbol{\lambda}_{i, k} \coloneqq \lambda_{k}(\boldsymbol{B}_i)$ its $k$-th eigenvalue.
Observe that $\mathcal{G}_i$ is a function of the subframework state $(\boldsymbol{p}_i, \boldsymbol{\psi}_i)$.
Guided by Theorem \ref{th:subframework_based_rigidity}, to maintain the rigidity of the framework, the controller preserves the rigidity of its subframeworks.
As discussed in \cite[III-D]{Schiano2017}, the rigidity eigenvalue's gradient is not well defined when its multiplicity is greater than one.
To avoid this indefiniteness, we propose cost function \eqref{eq:cost_rm}, that uses all nonzero eigenvalues of each $\boldsymbol{B}_i$.
This approach aligns with the well-known D-optimality criterion, which aims to maximize the system's observability.
\begin{equation}
    J_\mathrm{r}{(\boldsymbol{p}, \boldsymbol{\psi})} = \sum_{i \in \mathcal{V}} J_{\mathrm{r}, i}, \ J_{\mathrm{r}, i} = - \!\log \prod_{k=d+2}^{d|\mathcal{V}_i|} (\boldsymbol{\lambda}_{i, k} - \lambda_0).
    \label{eq:cost_rm}
\end{equation}
Thus, \eqref{eq:cost_rm} becomes unbounded if and only if there is an $i$ such that its rigidity eigenvalue ($\boldsymbol{\lambda}_{i, d+2}$) approaches a minimum threshold ($\lambda_0\!>\!0$), which occurs when subframework $\mathcal{F}_i$ gets close to losing rigidity.
Note that, under the communication protocol discussed in Section \ref{sec:desc_impl}, employing $\boldsymbol{\lambda}_{i, k}$ for $k=d+2, \ldots, d |\mathcal{V}_i|$ incurs no additional communication complexity compared to using only $\boldsymbol{\lambda}_{i, d+2}$.

\subsection{Control Action}
As is common in the literature, we adopt the single-integrator dynamic model $(\dot{p}_i, \dot{\psi}_i) = (v_i, \omega_i)$, where the right-hand side represents the control input.
The control action is then defined as an anti-gradient, designed to guide the group of robots toward a local minimum of the total cost $J{(\boldsymbol{p}, \boldsymbol{\psi})} = \kappa_\mathrm{m} J_\mathrm{m} + \kappa_\mathrm{c}  J_\mathrm{c} + \kappa_\mathrm{r}  J_\mathrm{r}$. Scalars $\kappa_\mathrm{m}, \kappa_\mathrm{c}, \kappa_\mathrm{r} > 0$ are included to adjust the relative importance of each term.
Hence, $v_i = - \partial J / \partial p_i$ and $\omega_i = - \partial J / \partial \psi_i$, where
\begin{align}
    \frac{\partial J_\mathrm{c}}{\partial p_i} &= 2 (\ell_c - \ell_0)  \sum_{j \in \mathcal{N}_{c, i}} \frac{\ell_c - d_{ij}}{(d_{ij} - \ell_0)^3} b_{ij}, \quad \frac{\partial J_\mathrm{c}}{\partial \psi_i} = 0  \label{eq:grad_ca} \\
    \frac{\partial J_\mathrm{r}}{\partial x_i} &= \sum_{j \in \mathcal{I}_i} \nu_{ji}, \ \nu_{ji} =\!\sum_{k=d+2}^{d|\mathcal{V}_j|}\! \frac{1}{\lambda_0 - \boldsymbol{\lambda}_{j, k} } \boldsymbol{\upsilon}_{j, k}^\mathsf{T} \frac{\partial \boldsymbol{B}_j}{\partial x_i} \boldsymbol{\upsilon}_{j, k},
    \label{eq:grad_rm}
\end{align}
where $x_i \coloneqq (p_i, \psi_i)$, $\mathcal{I}_i = \{j : \delta_{ij} \leq r^*_j\}$ is the set of vertices whose subframeworks include $i$, 
and $\boldsymbol{\upsilon}_{j, k}$, $k = d+2, \ldots, d|\mathcal{V}_j|$ are a set of mutually orthogonal and unitary eigenvectors, each associated with $\boldsymbol{\lambda}_{j, k}$.
A closed-form expression can be obtained by following the approach in \cite{Schiano2017}, here omitted for brevity.

\subsection{Decentralized Implementation}
\label{sec:desc_impl}
{
The rigidity control action \eqref{eq:grad_rm} requires robot $i$ to know $\nu_{ji}$ for all $j \in \mathcal{I}_i$.
The term $\nu_{ji}$ represents a velocity that $i$ should follow to preserve the rigidity of $\mathcal{F}_j$.
The following routing protocol ensures that each robot obtains this information.
Consider a subframework $\mathcal{F}_j$ containing robot $i$.
If $j$ receives the state $x_l = (p_l, \psi_l)$ for all $l \in \mathcal{V}_j$, then it can construct $\boldsymbol{B}_j$, compute $\nu_{ji}$ and send it over $\delta_{ji}$ hops reaching $i$.
For this purpose, each $i \in \mathcal{V}$ emits $x_i$ across $q_i = \max_{j \in \mathcal{I}_i} \delta_{ij}$ hops.}

{
We now discuss the communication requirements of the proposed protocol.
On the first place, the worst-case round-trip delay for the $j$-th subframework is $H_j = 2 r^*_j$.
On the second place, to estimate the communication complexity, we proceed as follows.
% consider that when robot $l$ sends a fixed-size message through $h$ hops, every robot in $\mathcal{V}_{l, h-1}$ must forward it.
To complete a communication round, robot $i$ must transmit:
(a) state $x_l$ for all $l$ such that $\delta_{li} < q_l$, and (b) action $v_{jl}$ for all $j$ and all $l$ such that $\delta_{ji} < \delta_{jl} \leq r^*_j$.
Hence, we define $C_i = |\{l : \delta_{li} < q_l\}| + |\{(j, l) : \delta_{ji} < \delta_{jl} \leq r^*_j\}|$ as the communication cost of robot $i$.
Observe that if $r^*_i = 1$ for all $i \in \mathcal{V}$, then $H_i = 2$ and $C_i = 1 + |\mathcal{N}_i|$ for all $i$, rendering a fully distributed approach.
Hence, we define 
\begin{equation}
    h_i = {H_i}/{\Delta}, \quad c_i = {C_i} / {(1 + |\mathcal{N}_i|)}
\end{equation}
as metrics for the communication delay and complexity.
For comparison purposes, note that previous approaches to rigidity maintenance typically yield $h_i = 1$ and $c_i \approx \text{constant}$.}

{
\figref{fig:complexity} illustrates the behavior of these metrics by sampling $50$ \textit{IBR} frameworks per $n$ up to $100$.
Random graphs were generated using the sensing model in Section \ref{sec:robot_network_model} with $\ell_i = 0.5$ for all $i$, with robot positions uniformly sampled in the unit cube and camera axes oriented toward the barycenter of all positions.
For $n \leq 15$, the proposed protocol is comparable to the existing schemes since most robots have $0.5\!\leq\!h_i\!\leq 1$ and $c_i \leq 2$.
As $n \approx 25$, the average delay begins to improve though the communication complexity augments ($h_i \leq 0.5$ and $c_i \geq 2$ for about $75\%$ or robots).
These results reveal a trade-off between communication delay and load (at the range $15 \leq n \leq 50$).
Finally, for $n \geq 50$, most robots satisfy $h_i \leq 0.5$ showing at least a twofold improvement in delay, and $c_i \leq 2$, corresponding to a moderate communication complexity.
These results are preliminary and merit deeper investigation, which is beyond the scope of this paper.}

\begin{figure}[!tb]
    \centering
    \vspace*{5pt}
    \begin{subfigure}{0.49\columnwidth}
        \includegraphics[scale=0.85]{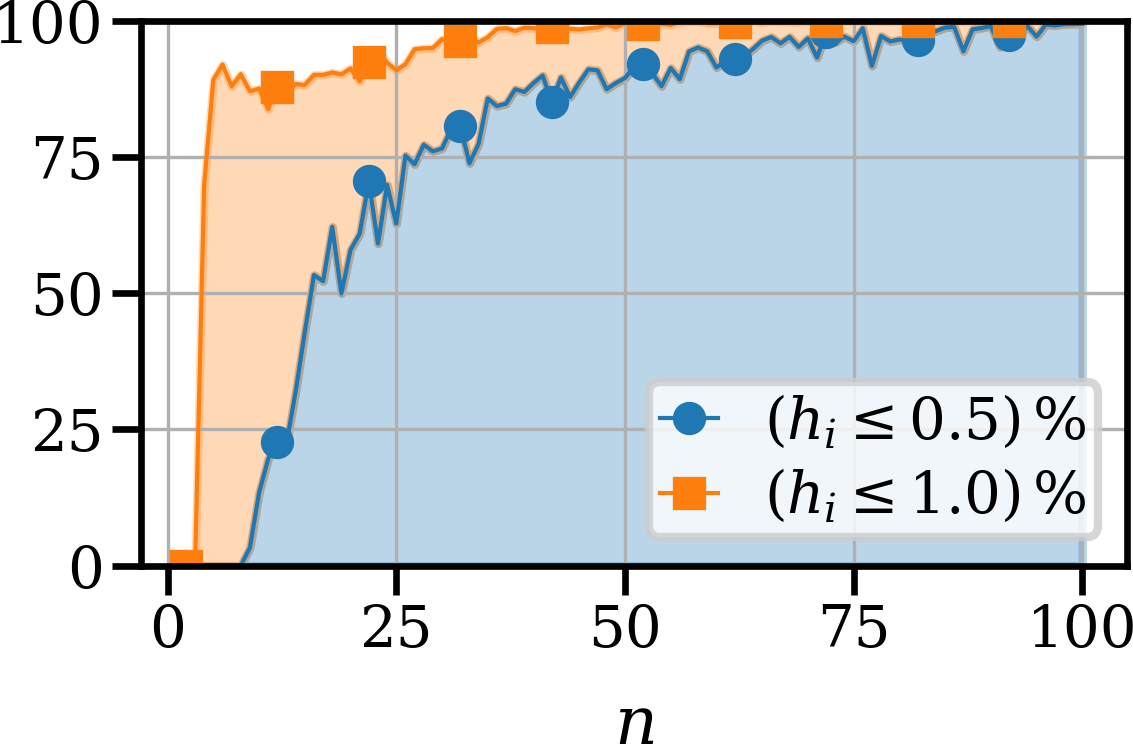}
        \caption{Delay.}
        \label{fig:bearing_delay}
    \end{subfigure}
    \begin{subfigure}{0.49\columnwidth}
        \includegraphics[scale=0.85]{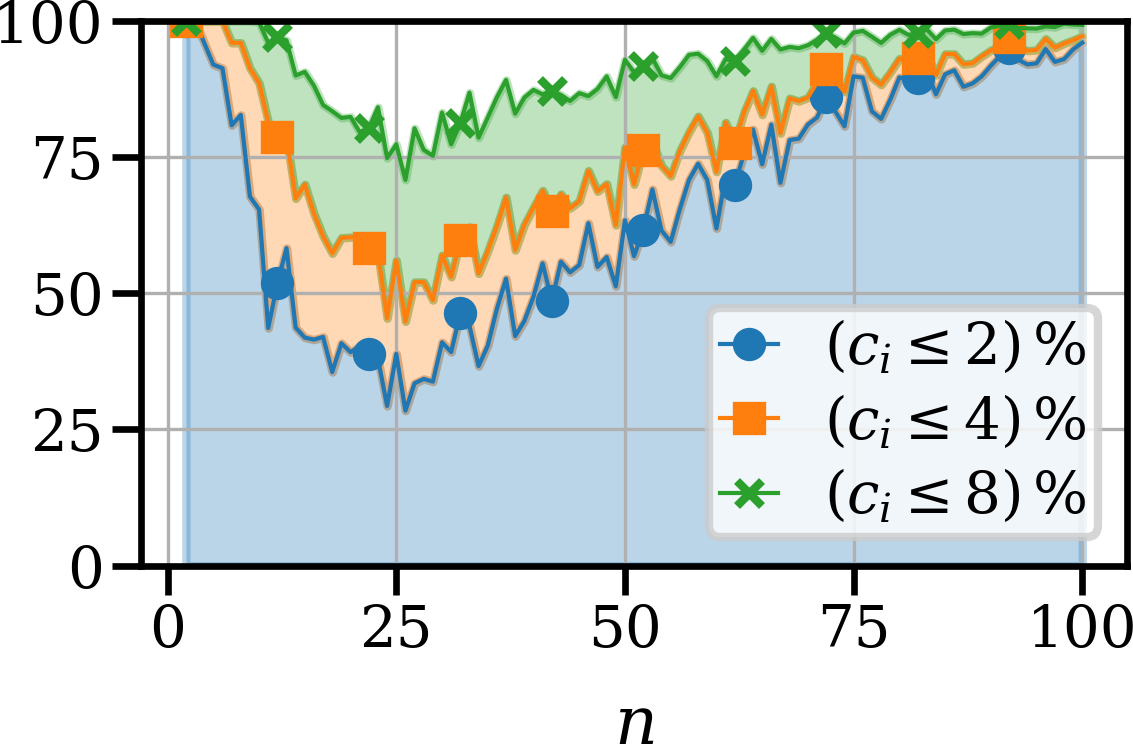}
        \caption{Complexity.}
        \label{fig:bearing_complexity}
    \end{subfigure}
    \caption{Percentage of subframeworks with $h_i \leq a$ and $c_i \leq b$ for different values of $a$ and $b$.}
    \label{fig:complexity}
\end{figure}

\section{Simulation Results}
\label{sec:simulations}
To validate the proposed control scheme, we performed simulations aimed at evaluating its performance and robustness under challenging conditions.
We employed $n = 15$ robots in $\mathbb{R}^3$, with $\ell_i = \ell_c = \SI{20}{\meter}$, $\gamma_i = 0.5$ ($\SI{60}{\degree}$ field-of-view half-angle) and $n_i = (\cos(\psi_i), \sin(\psi_i), 0)$.
As the initial realization, a framework was generated using the sensing model in Section \ref{sec:robot_network_model}, with robot positions uniformly sampled in $[0, 50]^2 \times [0, 0] \si{\cubic \meter}$ and camera axes oriented toward the barycenter of all positions.
The minimal radii of the subframeworks corresponding to this realization resulted in $r^*_i = 1$ for all $i$, yielding $C_i = 1 + |\mathcal{N}_i|$ and $H_i = 2$ for all $i$.
To maintain simplicity, the radii were held constant during the mission, even though the network's topology varied over time.
The controller parameters (see \eqref{eq:cost_ca} and \eqref{eq:cost_rm}) were $\ell_0 = \SI{1}{\meter}$ and $\lambda_0 = \num{1e-4}$, $\kappa_\mathrm{m} = 1$, $\kappa_\mathrm{c} = 0.5$ and $\kappa_\mathrm{r} = 0.1$.
As the mission objective, we propose a cooperative task involving the collection of $100$ targets randomly placed within $[0, 100]^2 \times [10, 50] \si{\cubic \meter}$.
For this purpose, the following cost function is employed, whose gradient attracts robots toward the uncollected targets, 
\begin{equation}
    J_\mathrm{m}(\boldsymbol{p}) = \sum_{i \in \mathcal{V}} f_\mathrm{m}(\zeta_i),  \tfrac{\partial J_\mathrm{m}}{\partial p_i} = f'_\mathrm{m}(\zeta_i) \frac{p_i - \tau_i}{\zeta_i},  \tfrac{\partial J_\mathrm{m}}{\partial \psi_i} = 0,
    \label{eq:target_cost}
\end{equation}
where $\tau_i$ is robot $i$'s closest uncollected target, $\zeta_i = \Vert p_i - \tau_i \Vert$, and 
$f'_\mathrm{m}$ represents the tracking velocity: it equals $\SI{1.5}{\meter\per\second}$ when $\zeta_i \leq \SI{20}{\meter}$, then decreases linearly, reaching $\SI{0}{\meter\per\second}$ at $\zeta_i \geq \SI{30}{\meter}$. 
Target $\tau_i$ is collected when $\zeta_i \leq \SI{5}{\meter}$, after which robot $i$ proceeds to the next.
This mission objective exerts a force that tends to separate the robots, which stresses the rigidity of the framework given the dynamic graph topology.
The distributed network localization algorithm from \cite{Zhao2016AUT} was implemented to enable robots to estimate their relative positions up to a scale using bearing measurements (which are communicated between neighboring robots).
In general, an additional range measurement between a pair of robots suffices to recover the correct scale, as shown in \cite{Schiano2016}.
The application considered requires that robots estimate their relative positions with respect to targets, see \eqref{eq:target_cost}. Hence, we assume that
two robots measure their relative positions with respect
to a target, which also ensures the correct scale.

\figref{fig:snapshots} shows the state of the network at different stages; example videos can
be found in \cite{SubframeworkVideo}.
Filled and empty diamonds represent uncollected and collected targets, respectively.
\figref{fig:simulation} presents performance metrics as follows. \figref{fig:eigenvalues} shows the evolution of the rigidity eigenvalues: the shadowed area is bounded by the min and max values, while the dashed line represents the rigidity of the framework.
After an initial rapid improvement of the subframework's rigidity eigenvalues, the controller manages to keep them above $\lambda_0$ despite the number of edges (measurements) decreasing over time as the robots disperse to collect targets.
This results from the controller’s effort to maximize rigidity by optimizing the robots' positions, resulting in triangular patterns.
It is noteworthy that, although the rigidity eigenvalue of the framework is only guaranteed to remain positive, its time evolution closely mirrors the subframeworks’ rigidity eigenvalues.
\figref{fig:targets} displays how the number of targets that were collected by robots steadily increases as the mission progresses. 
\figref{fig:distances} shows that the controller manages to avoid inter-robot collisions by maintaining all distances above $\ell_0$. 
\figref{fig:diameters} presents the diameters of the subframeworks (which overlap within the shadowed area), and the framework's diameter (dashed line).
Note that as robots disperse, the framework's diameter increases, but the diameter of the subframeworks stays below $2$, demonstrating the practicality of the proposed scheme.

\begin{figure}[!tb]
    \centering
    \vspace*{5pt}
    \includegraphics[scale=1.05]{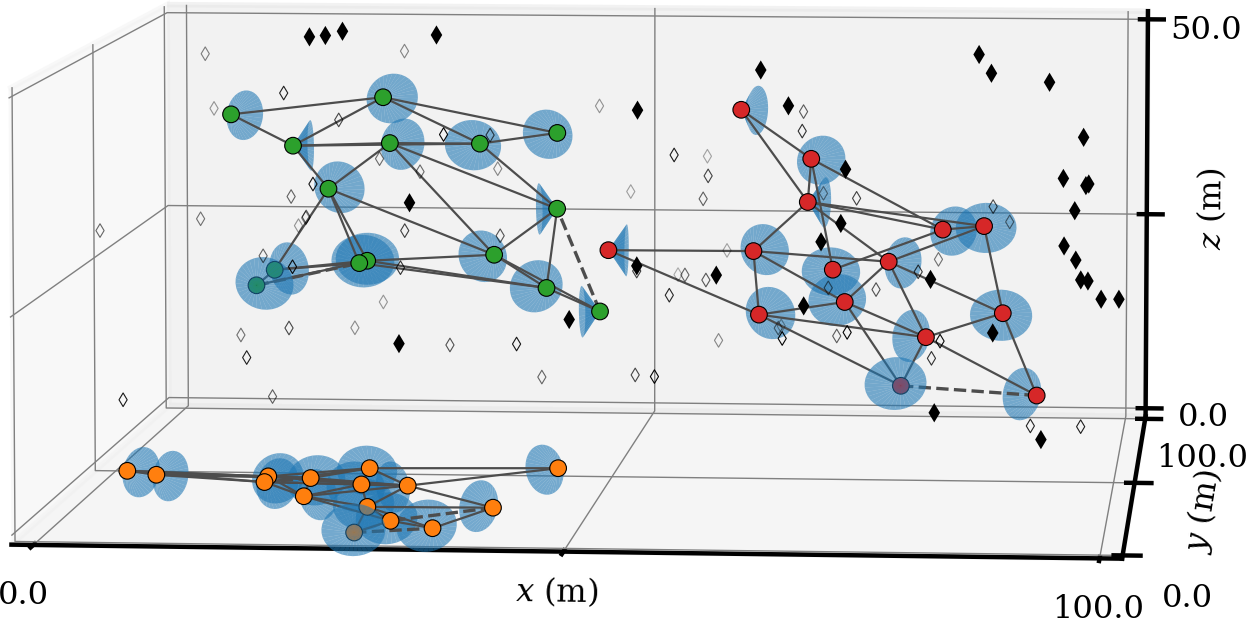}
    \caption{Snapshots at $t = \SI{0}{\second}$ (orange), $\SI{100}{\second}$ (green) and  $\SI{300}{\second}$ (red). Solid lines represent $\mathcal{E}$ while dashed lines $\mathcal{E}_c \setminus \mathcal{E}$.}
    \label{fig:snapshots}
\end{figure}

\begin{figure}[!tb]
    \centering
    % \vspace*{5pt}
    \begin{subfigure}{0.49\columnwidth}
        \includegraphics[scale=0.85]{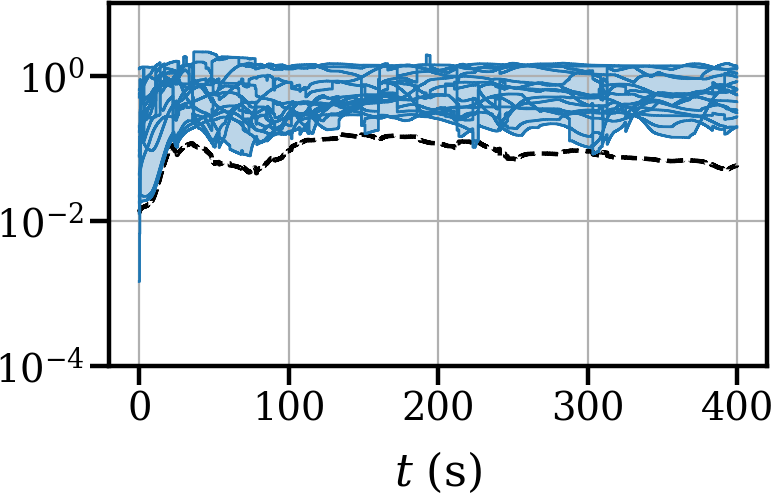}
        \caption{Rigidity eigenvalues vs. $t$.}
        \label{fig:eigenvalues}
    \end{subfigure}
    % \vspace*{5pt}
    \begin{subfigure}{0.49\columnwidth}
        \includegraphics[scale=0.85]{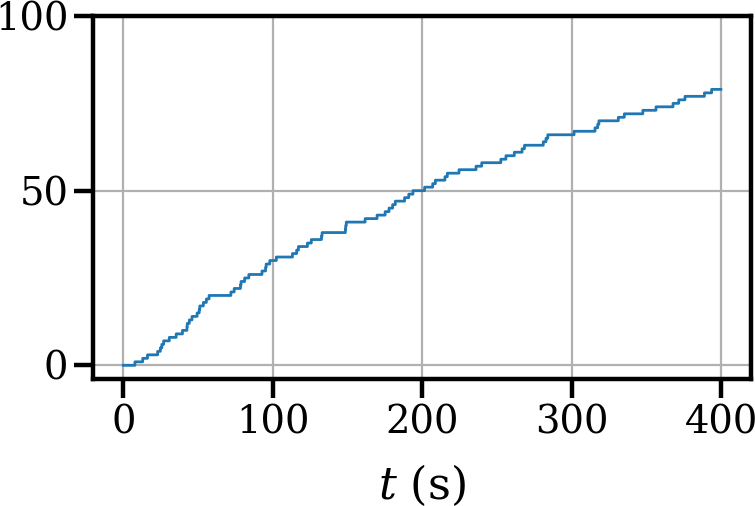}
        \caption{Collected targets vs. $t$.}
        \label{fig:targets}
    \end{subfigure}
    \begin{subfigure}{0.49\columnwidth}
        \includegraphics[scale=0.85]{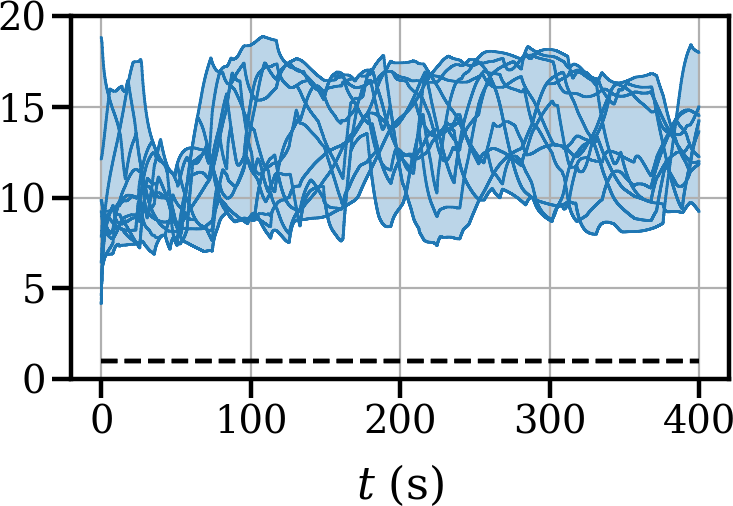}
        \caption{Inter-robot distances vs. $t$.}
        \label{fig:distances}
    \end{subfigure}
    \begin{subfigure}{0.45\columnwidth}
        \includegraphics[scale=0.85]{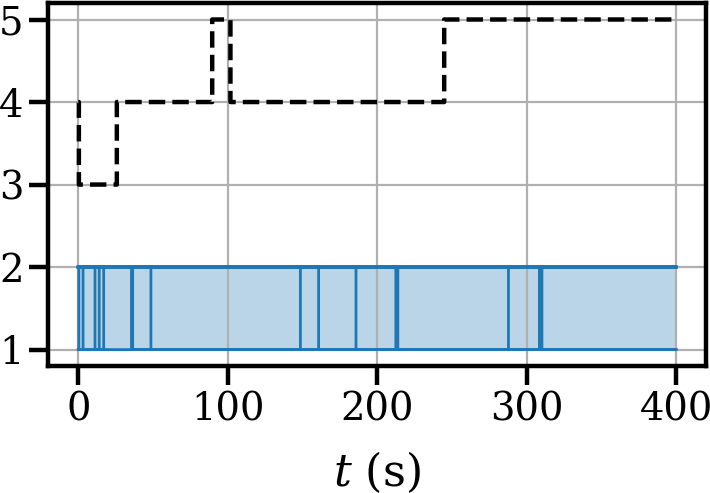}
        \caption{Diameters vs. $t$.}
        \label{fig:diameters}
    \end{subfigure}
    \caption{Simulation performance metrics.}
    \label{fig:simulation}
    % \vspace{-15pt}
\end{figure}

\section{Conclusions and Future Work}
\label{sec:conclusions}

We presented a subframework-based approach for studying bearing rigidity, effectively translating this global property into multiple local ones.
The rigidity eigenvalues of the subframeworks were employed to design a decentralized rigidity maintenance controller that avoids collisions between robots.
Both objectives are achieved as soft constraints, through the minimization of cost functions.
Simulations were performed to demonstrate the controller's capabilities.
The proposed strategy improves the delay issues of global estimators, as information is confined to subframeworks---typically much smaller than the full framework.
However, it incurs an increased communication complexity that can compromise scalability---unless subframeworks are small.
Simulations were conducted to explore this trade-off between communication delay and complexity.

{
As future research directions, we plan to extend our approach to consider the lack of a common reference frame to express bearing measurements.
This would require adapting the subframework-based tools to also account for orientation as part of the robot’s uncertain state.}
Also, we highlight the relevance of exploring ways to reduce communication complexity, while preserving the small delays.
Three different approaches arise.
Firstly, the design of optimal initial formations that minimize subframework sizes.
Secondly, developing a decentralized protocol for dynamically reducing the radii of the subframeworks.
Finally, a sparser decomposition---comprising only a minimal set of subframeworks---could significantly improve the overall efficiency of the system.

\bibliographystyle{IEEEtran}
\bibliography{refs}

\end{document}